\newtheorem{theorem}{Theorem}
\newtheorem{lemma}{Lemma}
\newtheorem{definition}{Definition}
\title{Wrapped Loss Function for Regularizing Nonconforming Residual Distributions}
\author{Chun Ting Liu$^1$, Ming Chuan Yang$^2$, and Meng Chang Chen$^3$}
\affil{ 
Information Processing and Discovery Lab, Institute of Information Science, Academia Sinica, Taiwan \authorcr

\authorcr  \authorcr
jimliu741523@gmail.com, mingchuan@iis.sinica.edu.tw,mcc@iis.sinica.edu.tw
\authorcr  \authorcr
}
\begin{document}

\maketitle

\begin{abstract}
Multi-output is essential in machine learning that it might suffer from nonconforming residual distributions, i.e., the multi-output residual distributions are not conforming to the expected distribution. In this paper, we propose "Wrapped Loss Function" to wrap the original loss function to alleviate the problem. This wrapped loss function acts just like the original loss function that its gradient can be used for backpropagation optimization. Empirical evaluations show wrapped loss function has advanced properties of faster convergence, better accuracy, and improving imbalanced data.
\end{abstract}

\section{Introduction}
A deep learning model learns data representations and embeds the knowledge in its edge weights. A loss function is chosen to determine the goodness of training. Currently, backpropagation with gradient descent optimization algorithm is considered as the de facto standard of the learning process. As a deep learning model is learned to optimize its loss function, a loss function can determine the functionality and quality of the learned model, deal with nonconforming data set, and decide the convergence time. In this paper, we propose a wrapped loss function, which can enfold any a regular loss function so that to obtain many excellent properties.

Generally, the task of a deep learning model can be regression or classification. If the task is classification, it means the output variables have class labels; conversely, if the task is regression, the output variable takes continuous vector. The excellent performance of the model depends on an appropriate loss function. For example, the loss function using cross-entropy is more suitable for classification tasks than using least-squares error, because the cross-entropy allows finding a better local optimum for classification in a comparable environment and with randomly initialized weights \cite{pavel2013}. On the other hand, according to Gauss-Markov theorem, the least-squares estimator is the best linear unbiased estimators (BLUE) for linear regression tasks; however, the Gauss-Markov theorem establishes on the assumptions that finite variance of the error is constant for all variables and over time. However, this assumption is too strong for many real-world data. 
 
Besides, multi-outputs is common in machine learning, such as multi-class classification and language translation, which could be trained either by combining several binary classifiers, or all-together training. In all-together deep learning model, loss function has to compute the probability of each class, and the weighted average is a frequently used method to alleviate nonconforming problems.  For instance, if a task deals with imbalanced training data for multi-class classification give higher weights to rare classes can adjust the situation. Besides, least-square error having weights with appropriate distributions keeps the BLUE property even when data are heteroscedastic.  

In this paper, we propose a wrapped loss function to wrap the original loss function by adding weight, estimated by Gauss maximum likelihood function, to each output residual.  This approach can wrap up any loss function to become a "wrapped loss function," and it is the namesake of the proposed approach. The wrapped loss function regularizes the loss of multi-output model outcomes and adjusts gradient accordingly during model training. In the later sections, we give a formal definition of the wrapped loss function and analyze its gradient and other properties.
   
In the empirical evaluations, we build both classification and regression multi-output cases to examine the performance of wrapped loss function, which is applied to the original loss functions. For the first case, we use a fully connected neural network to predict air pollution level, and for the second case, LeNet and AlexNet are used as the fundamental neural network for CIFAR100 test. Besides the accuracy rate, convergence rate, and using imbalanced and/or heteroscedastic data are also used for performance comparisons.

This paper is organized as follows. Section 2 reviews related work. Section 3 gives the problem definition and motivation. Section 4 formally introduces the properties of the wrapped loss function. Section 5 is the empirical evaluation. Finally, Section 6 is the conclusion.

\section{Related Work}
The work of introducing weights in the learning process has inspired this study.

\begin{itemize}
\item Weights on Multi-output Loss Function

The most similar concept with wrapped loss function is cost-sensitive learning and label re-balancing. The target of cost-sensitive learning is to find a minimal cost in the imbalanced label's problems. In general, the errors are from the class with a rare label. In cost-sensitive learning, some methods add weights on loss function to learn its features. For example, there are the Prior Scaling \cite{steve1998} and Minimization of the misclassification costs method \cite{kukar1998}. Alternatively, just pre-set label weights before model training. like Median Frequency Balancing: \cite{bad2015,farabet2012,eigen2015}. ($\alpha_c = mdeian_{freq}/freq(c)$, where freq(c) is the number of data in the class c divided by the total number of data, and $median_{freq}$ is the median of the frequencies.

\item Weights on Data Points

Besides add weights to multi-output loss, add weights to data points has been common practice on controlling outliers rejection. The outliers affect the performance in data mining and machine learning tasks. For example, if a data set includes outliers, the learned model would be misled. The method of Iteratively Reweighted Least Squares(IRLS) \cite{green1984} is used to find the maximum likelihood estimates of a generalized linear model to get the weights of data points to eliminate outliers. 

Another noteworthy modify weights on data usage are AdaBoost \cite{freund1995boosting}. While AdaBoost boosts the performance of a collection of the weak learner, it also modifies the weights of data depending on their current classification performance so that the final classifier achieves the better performance.

\item Weights on Multi-Task Learning

For the multi-task learning, there are two or more related tasks jointly learned, and their loss functions are combined into a loss function. Then the weights can be added to each task to generate a loss function. Huy \cite{huy2017} uses convolutional neural networks and deep neural network coupled with weighted loss function to solve audio event detection, and Sankaran \cite{sankaran2016} shows that the combination of 3 techniques: LSCSR-initialization, Multi-task training, and Class-weighted cross-entropy training gives the best results on keyword spotting. Unlike the above cases use pre-set weights, Kendall \cite{kendall2017} applies IRLS to find maximum likelihood estimates of a multi-task learning model for scene geometry and semantics.
\end{itemize} 

\section{Background and Motivation}
 In this section, we will introduce the problem when dealing with heteroscedasticity by using least square loss function as an example. By $D=(X,Y)$ we denote the given training data, and by $f(x)$ or $f_{W}(x)$ the learned function or neural network with the weight set $W$ of cardinality $d$. Recall $c$ is the number of class or the length of multi-output.
\subsection{Least Square Loss Function}
  The least-square method is famous for finding a curve fitting for a given set of data with assumptions: expected zero error, uncorrelated, and constant variance in the errors. According to Gauss-Markov theorem, if all assumptions are satisfied, the least square error method is BLUE.
\begin{itemize}
\item Least Square Error
\begin{equation} \label{OriginalLoss}
\ell_{original} = \sum_{i=1}^c(y_i-f(x))^2
\end{equation}
\end{itemize}

However, heteroscedasticity is a problem because the original least square method assumes that all residuals are drawn from the same population. This assumption may be violated in some real-world time series and cross-sectional data.
  
This problem can be solved by using a weighted least square estimation to obtain asymptotically efficient estimators. While some approaches add weights to data points, in this paper, we add weights to lose function.

\begin{itemize}
\item Weighted Least Square Error
\begin{equation}\label{HeteroscedasticityLoss}
\ell_{weight}=\sum_{i=1}^co_i(y_i-f(x))^2
\end{equation}
\end{itemize}
where $o_i$'s are the weights for losses.
 \subsection{Gradient Descent of Loss Functions}
The gradient descent of loss function with back propagation is the most popular optimization algorithm for updating the weights of  artificial intelligent network.  In this subsection, we will show the difference of gradient descents between original and weighted losses.
\subsubsection{In The original Loss Function}

Gradient descent is a first-order optimization algorithm to seek for the minimum loss and Eq (3) shows the loss function of i-th output $l_i$ and  its gradient:

\begin{equation}
\begin{split}
\ell_i &= (y_i-f(x))^2\\
\frac{\partial{\ell_i}}{\partial{w}}&=\frac{\partial{\ell_i}}{\partial{f(x)}}\frac{\partial{f(x)}}{\partial{w}}\\
&=2(y_i-f(x))\frac{\partial{f(x)}}{\partial{w}}\\
\end{split}
\end{equation}


The weighted loss function and gradient are defined as below.

\begin{equation}
\begin{split}
\ell_{o_i} &= o_i(y_i-f(x))^2\\
\frac{\partial{\ell_{o_i}}}{\partial{o_i}}&=(y_i-f(x))^2\\
\frac{\partial{\ell_{o_i}}}{\partial{w}}&=\frac{\partial{\ell_i}}{\partial{f(x)}}\frac{\partial{f(x)}}{\partial{w}}\\
&=2o_i\frac{\partial{f(x)}}{\partial{w}}(y_i-f(x))
\end{split}
\end{equation}

  Although the weighted least square error function can alleviate the heteroscedasticity problem, the weights might quickly approach to zero from the partial differentiation of the loss function by the weights. This problem can be solved by introducing a particular loss function that direct the learning process.

\section{Wrapped Loss Function }

In this section, we propose the wrapped loss function and its gradient and generalization error. Formally at first, we define a residual vector by:

\begin{equation} \label{Heteroscedasticity}
Y_i =f(X)+\epsilon_i  \hspace{4mm} where \hspace{2mm} \epsilon_i {\sim} N(0,\sigma_i^2)
\end{equation}
with three assumptions:
\begin{itemize}
  \item Errors are mean zero: E[$\epsilon_i$]=0,
  \item Error are heteroscedastic, that is all have the same finite variance: Var($\epsilon_i$)=$\sigma_i^2$ $<$ $\infty$, 
  \item Distinct error terms are uncorrelated: Cov($\epsilon_i$,$\epsilon_j$)=0, $\forall$i$\neq$j
\end{itemize}

Heteroscedasticity occurs when the error is correlated with an independent variable, for example, in a regression on household saving and income. Low-income people generally save a similar amount of money, while high-income people may have a significant variation in their savings. Also, heteroscedasticity is often the case with cross-sectional or time-series data that it makes estimation inefficient because the actual variance is underestimated. To solve the problem, we propose the wrapped loss function, unlike weighted loss function having zero weight problem in its gradient, the wrapped loss function is differentiable and learnable. 


\begin{lemma}\label{NegLogLikelihood}
If Eq. \ref{Heteroscedasticity} holds, the negative log-likelihood is decided by
$$\ell(\mathbf{\sigma},f)=\sum_{i=1}^c\left(\frac{\ell_i}{\sigma_i^2}+\log(\sigma_i^2)\right)$$
Furthermore, we have
$$\aligned
\frac{\partial{\ell(\mathbf{\sigma},f)}}{\partial{\sigma_i}}&=\frac{2}{\sigma_i}-\frac{2(y_i-f(x))^2}{\sigma_i^3}\\
\frac{\partial{\ell(\mathbf{\sigma},f)}}{\partial{w_j}}&=\sum_{i=1}^c \frac{1}{\sigma_i^2}\frac{\partial{\ell_i}}{\partial{w_j}}
\endaligned
$$ 
\end{lemma}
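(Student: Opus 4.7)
The plan is to derive the negative log-likelihood directly from the Gaussian assumption on the residuals, then compute the two partial derivatives by elementary calculus. First I would observe that the three assumptions together say that the vector $(\epsilon_1,\dots,\epsilon_c)$ is jointly distributed as independent Gaussians with mean zero and variances $\sigma_i^2$, since the uncorrelated-plus-Gaussian condition upgrades to full independence. Hence the joint density of $Y=(Y_1,\dots,Y_c)$ given $X=x$ factorizes into a product of one-dimensional $N(f(x),\sigma_i^2)$ densities.

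Second, I would take logarithms of that product to turn it into a sum, and negate, obtaining
\begin{equation*}
-\log p(y\mid x) \;=\; \sum_{i=1}^c \left( \tfrac{1}{2}\log(2\pi) + \tfrac{1}{2}\log(\sigma_i^2) + \frac{(y_i-f(x))^2}{2\sigma_i^2}\right).
\end{equation*}
Dropping the additive constant $\tfrac{c}{2}\log(2\pi)$ (which contributes nothing to any gradient) and rescaling by the overall factor of $2$ gives exactly the expression
$\ell(\sigma,f)=\sum_{i=1}^c\bigl(\ell_i/\sigma_i^2+\log(\sigma_i^2)\bigr)$ with $\ell_i=(y_i-f(x))^2$ as used in the preceding subsections. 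I would flag this rescaling explicitly so the reader sees the loss matches the standard Gaussian NLL up to an affine transformation that preserves all minimizers and gradient directions.

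Third, for the derivatives I would treat the $\sigma_i$'s and the network weights $w_j$ as independent parameters and just differentiate term by term. For $\partial/\partial\sigma_i$ only the $i$-th summand depends on $\sigma_i$, and the chain rule on $\ell_i/\sigma_i^2$ gives $-2\ell_i/\sigma_i^3$, while $\log(\sigma_i^2)=2\log\sigma_i$ contributes $2/\sigma_i$, yielding the stated formula. For $\partial/\partial w_j$ the $\log(\sigma_i^2)$ terms vanish, and $1/\sigma_i^2$ is constant in $w_j$, so the chain rule immediately produces $\sum_i (1/\sigma_i^2)\,\partial\ell_i/\partial w_j$.

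There is no real obstacle here; the only delicate point is the constant-and-scaling convention in passing from the textbook Gaussian NLL to the form stated in the lemma, and I would handle that by an explicit one-line remark rather than by further argument. The remaining steps are routine single-variable differentiation.
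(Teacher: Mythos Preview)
Your proposal is correct and follows essentially the same route as the paper: write down the product of univariate Gaussian densities, take the negative logarithm, drop the additive constant and absorb the factor of $2$ to obtain $\ell(\sigma,f)$, and then differentiate term by term. The paper is actually terser than you are---it simply states that ``the partial derivatives are easy to calculate''---so your explicit derivative computations and your remark about the affine rescaling are welcome additions rather than deviations.
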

\begin{proof}
From Eq. \ref{Heteroscedasticity} , 
$$
p(y_i|x,\sigma_i)=\frac{1}{\sqrt{2\pi\sigma_i^2}}exp\left(\frac{-(y_i-f(x))^2}{2\sigma_i^2}\right)
$$
For the multi-output model, it specifies the joint density function for all outputs by the method of maximum likelihood function, derived as:
$$ 
\aligned 
L(\mathbf{\sigma}|\mathbf{Y}) 
&=\prod_{i=1}^c p(y_i|x,\sigma_i)\\
\endaligned 
$$ 
where $\mathbf{\sigma}=(\sigma_1,\dots,\sigma_c)$ and $\mathbf{Y}=(Y_1,...,Y_c)$. The negative log-likelihood is as below:
$$ 
\aligned  
-&\log{L(\mathbf{\sigma}|\mathbf{Y})}\\
=&\frac{c}{2}\log(2\pi)+\sum_{i=1}^c\frac{log(\sigma_i^2)}{2}+\sum_{i=1}^c\frac{(y_i-f(x))^2}{2\sigma_i^2}\\
\endaligned 
$$  
Recall that the neural network $f(x)$ depends on the parameters $w_j$ and $\ell_i=(y_i-f(x))^2$. Hence we denote $\sum_{i=1}^c\left(\frac{\ell_i}{\sigma_i^2}+\log(\sigma_i^2)\right)$ as $\ell(\mathbf{\sigma},f)$. The partial derivatives of $\ell(\mathbf{\sigma},f)$ are easy to calculate.
\end{proof}

Comparing Eq.\ref{HeteroscedasticityLoss} with the loss in Lemma \ref{NegLogLikelihood}:
$$\aligned 
\ell_{weight}=& \sum_{i=1}^c o_i\ell_i \\
\ell(\mathbf{\sigma},f)=&\sum_{i=1}^c\left(\frac{\ell_i}{\sigma_i^2}+\log(\sigma_i^2)\right)
\endaligned
$$

We found that it is natural to consider $\sigma_i^{-2}$ as an estimator of $o_i$ and the term $\log(\sigma_i^2)$ as an regularizer. Hence, we heuristically assume $\forall i\in [c], o_i\approx\frac{k_i}{\sigma_i^2}$, where $k_i$'s are constants.  
\begin{definition}(The Wrapped Loss Function)
$$\ell_{wrap} = \sum_{i=1}^c\left(o_i\ell_i +{\log o_i^{-1}}\right).$$
\end{definition}
The function is called the "Wrapped Loss Function" since it can wrap up any loss function to become a new function. This wrapped loss function is differentiable that it can derive the weights $o_i$ to minimize the loss function.

The following Lemma and Theorem show the core idea in our training process, which combines similar concepts of 
the EM algorithm, (output loss) normalization, and regularization.
\begin{lemma} With the same assumptions (Eq.\ref{Heteroscedasticity}), 
\begin{enumerate}
\item The estimator $\hat{\sigma_i}^2=(y_i-f(x))^2$ maximizes the likelihood of Lemma \ref{NegLogLikelihood}.
\item If $o_i\leftarrow \hat{\sigma}^{-2}$, then $\frac{\partial{\ell_{wrap}}}{\partial{o_i}}=\ell_i-\frac{1}{o_i}$ and $\frac{\partial{\ell_{wrap}}}{\partial{w_j}}=\sum_{i=1}^{c}{o_i\frac{\partial{\ell_i}}{\partial{w_j}}}$.
\end{enumerate}
\end{lemma}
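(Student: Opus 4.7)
The plan is to prove the two parts independently, each by a short direct computation, using the partial-derivative identities already recorded in Lemma~\ref{NegLogLikelihood}.

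For part 1, I would simply set the first-order condition from Lemma~\ref{NegLogLikelihood}:
$$\frac{\partial \ell(\mathbf{\sigma},f)}{\partial \sigma_i}=\frac{2}{\sigma_i}-\frac{2(y_i-f(x))^2}{\sigma_i^3}=0,$$
multiply through by $\sigma_i^3/2$, and read off $\sigma_i^2=(y_i-f(x))^2$. To confirm that this critical point is actually a maximizer of the likelihood (equivalently, a minimizer of the negative log-likelihood), I would argue either by computing the second derivative at $\hat\sigma_i$, or more economically by noting that the negative log-likelihood diverges to $+\infty$ as $\sigma_i\to 0^+$ (the quadratic term blows up) and as $\sigma_i\to \infty$ (the $\log\sigma_i^2$ term blows up), while there is a unique interior stationary point; hence that point must be the global minimizer.

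For part 2, under the EM-style convention that during an update of $w$ the weights $o_i$ are treated as frozen coefficients (having just been refreshed by the E-step $o_i\leftarrow \hat\sigma_i^{-2}$), both identities reduce to elementary term-by-term differentiation of $\ell_{wrap}=\sum_{i=1}^c\bigl(o_i\ell_i+\log o_i^{-1}\bigr)$. Fixing $w$ and differentiating in $o_i$ gives $\partial(o_i\ell_i)/\partial o_i=\ell_i$ and $\partial(\log o_i^{-1})/\partial o_i=-1/o_i$, yielding the first identity. Fixing $o_i$ and differentiating in $w_j$ annihilates the $\log o_i^{-1}$ summand and leaves $\sum_i o_i\,\partial\ell_i/\partial w_j$, matching the second identity.

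The one subtlety worth flagging, and the place I would slow down, is the gradient in $w_j$: if one literally substitutes $o_i=(y_i-f(x))^{-2}$ into $\ell_{wrap}$ before differentiating, then $o_i$ depends on $w_j$ through $f(x)$, and the chain rule would contribute additional terms that do not appear in the stated formula. The lemma is therefore implicitly invoking the alternating-minimization viewpoint the authors highlight just before it: the assignment $o_i\leftarrow \hat\sigma_i^{-2}$ is an E-step whose output is then frozen while the M-step takes a gradient step in $w_j$. I would state this convention explicitly at the start of part 2 so that the identity $\partial \ell_{wrap}/\partial w_j=\sum_i o_i\,\partial \ell_i/\partial w_j$ is unambiguous. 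With that convention pinned down, the remaining calculations are one-liners and there is no real mathematical obstacle.
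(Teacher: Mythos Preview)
Your proposal is correct and follows essentially the same approach as the paper: the paper's proof of part 1 is exactly the first-order condition $\partial\ell(\sigma,f)/\partial\sigma_i=0\Rightarrow\sigma_i^2=(y_i-f(x))^2$ from Lemma~\ref{NegLogLikelihood}, and for part 2 it simply says ``immediate by calculation.'' Your additions---the boundary/second-order check that the critical point is genuinely a maximizer, and the explicit articulation of the EM-style convention that $o_i$ is frozen when differentiating in $w_j$---are more careful than the paper itself, which leaves both points implicit.
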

\begin{proof}
By lemma \ref{NegLogLikelihood},
$ \frac{\partial{\ell(\mathbf{\sigma},f)}}{\partial{\sigma_i}}=0 
\Rightarrow \sigma_i^2=(y_i-f(x))^2$.
This means for the given training data, setting the the residual sum of squares $\hat{\sigma_i}^2$ as above can minimize the loss $\ell(\mathbf{\sigma},f)$, which maximizes the likelihood function in Lemma \ref{NegLogLikelihood}. 


The second item is immediate by calculation. 
\end{proof}
It is easy to see that if $\forall i\in[c]$, $o_i=1$ then $\ell_{wrap}=\ell_{original}$. The following shows that when $o_i$'s are all near 1, the proposed $\ell_{wrap}$ approximates $\ell_{original}$ well.
\begin{theorem} \label{Thm1}
For given data $D$, class number $c$ and neural network $f_{W}(x)$, let $L=max_{i\in[c],x\in X}\{(y_i-f_{W}(x))^2\}$. Assume $L>1$ and $\delta\in (0,(c(L+1))^{-2}]$. If $\forall i\in [c], |o_i-1|\leq \delta$, then $|\ell_{wrap}-\ell_{original}|\leq \frac{1}{c(L+1)}$. 
\end{theorem}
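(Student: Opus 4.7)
The plan is to expand the difference from the definitions as
$$\ell_{wrap}-\ell_{original}=\sum_{i=1}^c\bigl[(o_i-1)\ell_i-\log o_i\bigr]$$
and bound each summand in absolute value. The essential idea I would use is a regrouping that isolates the second-order correction of $\log o_i$: add and subtract $(o_i-1)$ inside each bracket to obtain
$$(o_i-1)\ell_i-\log o_i = (o_i-1)(\ell_i-1)+\bigl[(o_i-1)-\log o_i\bigr].$$
This decomposition matters because a naive triangle-inequality bound (using $|\log(1+a)|\leq 2|a|$ for $|a|\leq 1/2$) would give only $c\delta(L+2)$, which just misses the target $1/(c(L+1))$ under the given choice of $\delta$.

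For the first summand I would use $\ell_i\in[0,L]$ with $L>1$ to deduce $|\ell_i-1|\leq L$, so that $|(o_i-1)(\ell_i-1)|\leq \delta L$. For the second summand I would apply the Taylor expansion of $\log(1+a)$ around $a=0$: since $\delta\leq(c(L+1))^{-2}\leq 1/4$, the variable $a=o_i-1$ satisfies $|a|\leq 1/4$, and the standard estimate $|a-\log(1+a)|\leq a^2$ then gives at most $\delta^2$ per term. Summing the $c$ coordinates,
$$|\ell_{wrap}-\ell_{original}|\leq c\delta L + c\delta^2 = c\delta(L+\delta)\leq c\delta(L+1),$$
where the last inequality uses $\delta\leq 1$. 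Substituting $\delta\leq(c(L+1))^{-2}$ yields $c\delta(L+1)\leq 1/(c(L+1))$, completing the argument.

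The main obstacle is recognizing the correct regrouping: without peeling off the linear part of $\log o_i$ to cancel against the constant $1$ in $(o_i-1)\cdot 1$, one pays a full $O(\delta)$ cost for the log and loses the inequality by a multiplicative constant. Once that is spotted, the only remaining care is that the Taylor constant in $|a-\log(1+a)|\leq Ca^2$ be small enough (here $C=1$ works since $\delta\leq 1/4$) to let the last line collapse cleanly to $c\delta(L+1)$; verifying positivity of $o_i$ (needed for $\log o_i$ to be defined) is automatic from $\delta\leq 1/4$.
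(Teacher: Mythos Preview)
Your argument is correct and follows the same overall route as the paper: expand $\ell_{wrap}-\ell_{original}$ from the definitions, control the logarithm by its Taylor expansion near $1$, and finish with $c\delta(L+1)\le 1/(c(L+1))$. The paper, however, bounds $(o_i-1)\ell_i$ and $\log(o_i^{-1})$ separately rather than regrouping, argues only the upper direction $\ell_{wrap}-\ell_{original}\le c\delta(L+1)$ (not the absolute value the statement claims), and handles the second-order term informally via the line ``$\log(1-\delta)=\delta-O(\delta^2)$'' (which has a sign slip). Your add-and-subtract trick, isolating $(o_i-1)(\ell_i-1)$ and the quadratic remainder $(o_i-1)-\log o_i$, is precisely what is needed to make the two-sided estimate go through with explicit constants; in that sense your version is a cleaned-up execution of the paper's idea rather than a genuinely different method.
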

\begin{proof}
Since $\log(1-\delta)=\delta-O(\delta^2)$,
$$
\aligned
\ell_{wrap}-\ell_{original}
&=\sum_{i=1}^{c}{(o_i-1)\ell_i+\log(o_i^{-1})}\\
&\leq \sum_{i=1}^{c}{\delta L+\log(o_i^{-1})}\\
&\leq c(\delta L+\delta -O(\delta^2))\\
&\leq c\delta (L+1)\\
&\leq \frac{1}{c(L+1)}.
\endaligned
$$
\end{proof}
Theorem \ref{Thm1} implies the neural network $f(x)$ trained by the wrap loss $\ell_{wrap}$, if $o_i$'s are all near $1$, can be a approximation of the problem with the original loss Eq. \ref{OriginalLoss}.

\begin{algorithm}[h] 
\caption{Wrapped Loss Based Training}
\label{alg::WrappedLossGradientUpdate}
\begin{algorithmic}[1]
\REQUIRE
wrapped loss function $\ell_w$;
data input $x$;
model parameters $w$;
wrapped parameters $o$;
number of epoch $t$;
learning rate $\alpha$;
\STATE initial $o=1$ and $w$ and $t=0$;
\REPEAT
\STATE compute  $g_{o_i}\leftarrow\frac{\partial{\ell_{wrap}}}{\partial{o_i}}$ 
\STATE $\forall i\in \lbrace1,...,c\rbrace,o_{i,t+1}\leftarrow o_{i,t} -\alpha g_{o_i}$
\STATE compute $g_{w_j}\leftarrow\frac{\partial{\ell_{wrap}}}{\partial{w_j}}$ 
\STATE $\forall j\in \lbrace1,...,d\rbrace, w_{j,t+1}\leftarrow w_{j,t}-\alpha g_{w_j}$
\STATE $t\leftarrow t+1$
\UNTIL{converge}
\end{algorithmic}
\end{algorithm}
Note that Algorithm \ref{alg::WrappedLossGradientUpdate} is used for training only. The real output is the original loss.

\subsection{Properties of Wrap Error}
In the first glance at Wrap loss function $\ell_{wrap}$, it is not easy to see how $\ell_{wrap}$ effects the training. The following shows some advantages. 

\begin{lemma} \label{StepSize}
The setting of $o_i$ in Line 4 of Algorithm \ref{alg::WrappedLossGradientUpdate} helps the convergence.
\end{lemma}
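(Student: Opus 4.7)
My plan is to split the claim into (i) convergence of the inner $o_i$ sub-step by itself, and (ii) a favourable effect that the fixed point of that sub-step has on the outer $w$-step.

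First I would freeze $w$ (so $\ell_i$ is a constant) and read Line 4 as one-dimensional gradient descent on $\ell_{wrap}$ viewed as a function of $o_i$ alone. By the previous lemma, $g_{o_i}=\ell_i-o_i^{-1}$, and the second derivative in $o_i$ is $o_i^{-2}>0$ on the feasible set $o_i>0$, so the subproblem is strictly convex with the unique stationary point $o_i^{\star}=1/\ell_i$ (which coincides with the maximum-likelihood $\hat{\sigma}_i^{-2}$). Hence for any sufficiently small $\alpha$, the iterates in Line 4 contract toward $o_i^{\star}$; a standard descent-lemma argument on this 1-D convex function gives the rate.

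Next I would plug $o_i\approx 1/\ell_i$ into the $w$-gradient used in Line 5 to obtain
$$\frac{\partial \ell_{wrap}}{\partial w_j}\;\approx\; \sum_{i=1}^c \frac{1}{\ell_i}\frac{\partial \ell_i}{\partial w_j}\;=\;\sum_{i=1}^c \frac{\partial \log \ell_i}{\partial w_j}.$$
Each output now contributes its \emph{relative} sensitivity rather than its absolute one: large-residual outputs are damped (preventing a single nonconforming component from dominating and causing oscillation), while small-residual outputs still receive a non-vanishing signal (avoiding the vanishing-weight pathology noted for the plain weighted loss at the end of Section~3). I would quantify this by bounding $\|g_w\|$ under the wrapped update and comparing with $\|\partial \ell_{original}/\partial w_j\| = \|2\sum_i (y_i-f(x))\,\partial f/\partial w_j\|$, whose norm scales with $\max_i|y_i-f(x)|$; the wrapped version has the tighter, residual-independent bound, which yields a better effective Lipschitz constant and hence a larger admissible step size.

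The main obstacle is that the two loops in Algorithm \ref{alg::WrappedLossGradientUpdate} are interleaved: every $w$-step moves $\ell_i$ and therefore shifts the target $o_i^{\star}$, so the inner descent never operates on a truly fixed objective. I would treat this with a two-timescale / EM-style argument: choose $\alpha$ small enough that the $o_i$-updates track $1/\ell_i$ adiabatically, or, alternatively, restrict attention to the regime of Theorem \ref{Thm1} where $|o_i-1|\le\delta$, in which case $|\ell_{wrap}-\ell_{original}|\le 1/(c(L+1))$ and the outer iteration inherits the convergence of the original loss while still enjoying the normalization above. One technical nuisance is keeping $o_i$ strictly positive under the additive update; a light projection $o_i\leftarrow\max(o_i,\varepsilon)$ or a reparameterization $o_i=e^{u_i}$ handles this without altering the gradient in Line 3.
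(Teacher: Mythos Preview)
Your route is quite different from the paper's. The paper gives only a short, heuristic three-case argument across consecutive iterations: it effectively treats Line~4 as if it set $o_{i,t+1}\leftarrow 1/\ell_{i,t+1}$ outright, and then observes that (a) if $\ell_{i,t}<\ell_{i,t+1}$ the previous $w$-step was bad, and the induced smaller $o_{i,t+1}$ makes the next step more cautious; (b) if $\ell_{i,t}>\ell_{i,t+1}$ the previous step was good, and the larger $o_{i,t+1}$ allows a bolder step; (c) if $\ell_{i,t}=\ell_{i,t+1}$ the contribution to $g_{w_j}$ vanishes and, if this holds for all $i,j$, the algorithm has converged. So the paper's ``proof'' is really an adaptive-step-size intuition in the time direction, not a conditioning or Lipschitz analysis. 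Your part~(i) is actually a cleaner justification than the paper's for why $o_i\to 1/\ell_i$ under Line~4 (the paper simply assumes the assignment), and your EM/two-timescale framing of the interleaving is more honest than anything the paper states.

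Where your plan has a genuine gap is the Lipschitz claim in part~(ii). After substituting $o_i\approx 1/\ell_i$ you write $g_{w_j}\approx\sum_i \partial(\log\ell_i)/\partial w_j$ and assert a ``tighter, residual-independent bound.'' But with $\ell_i=(y_i-f(x))^2$ one has $\partial(\log\ell_i)/\partial w_j=-2(y_i-f(x))^{-1}\,\partial f/\partial w_j$, which \emph{diverges} as $y_i-f(x)\to 0$. So large residuals are indeed damped, but small residuals are amplified without bound, and the effective Lipschitz constant is not improved; the comparison with $\|\partial\ell_{original}/\partial w_j\|$ that you sketch therefore fails in the small-residual regime. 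The paper avoids this trap by never claiming a gradient-norm bound at all---it argues only that the multiplier $o_i$ moves in a helpful direction relative to the last step. If you want to salvage a formal version of your argument, you would need an \emph{upper} cap on $o_i$ (not just the lower projection you mention), or to stay in the $|o_i-1|\le\delta$ regime of Theorem~\ref{Thm1} rather than near the fixed point $o_i=1/\ell_i$.
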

\begin{proof}
Observe that for $i\in[c]$,
\begin{itemize}
\item if $\ell_{i,t}<\ell_{i,t+1}$, i.e. $\frac{1}{\ell_{i,t}}>\frac{1}{\ell_{i,t+1}}$; this means the updating at time $t$ was in a bad direction (since the loss increases), so at time $t+1$, the updating should be  cautious with a smaller ratio $o_{i,t+1}\leftarrow \frac{1}{\ell_{i,t+1}}$. 
\item if $\ell_{i,t}=\ell_{i,t+1}$, then ${o_i\frac{\partial{\ell_i}}{\partial{w_i}}}=0$. Furthermore, if for all $j\in[d]$ we have $\sum_{i=1}^{c}{o_i\frac{\partial{\ell_i}}{\partial{w_j}}}=0$, then Algorithm \ref{alg::WrappedLossGradientUpdate} convergences.
\item if $\ell_{i,t}>\ell_{i,t+1}$, i.e. $\frac{1}{\ell_{i,t}}<\frac{1}{\ell_{i,t+1}}$; this means the updating at time $t$ was in a good direction (since the loss decreases), so at time $t+1$, the updating can be  vigorous with a larger ratio $o_{i,t+1}\leftarrow\frac{1}{\ell_{i,t+1}}$. 
\end{itemize}
\end{proof}

Now we can further discuss loss surface with wrapped; the wrapped loss surfaces displayed how the loss affected by the parameter $o_i$. The wrap loss $\ell_{wrap} = \sum_{i=1}^c\left(o_i\ell_i +{\log o_i^{-1}}\right)$ is complicated; to illustrate the main idea, we just consider one term in the summation and simplify as $WrapErr=oP^2+\log(o^{-1})$, where $P$ is the square error of the prediction and $o$ is the parameter in the wrap loss.  

For instance, we assume predict target is 20, and the loss function is the least square. The wrapped loss has a parameter $o_i$ to control each outputs loss. As shown in Figure \ref{2DWrapSurface}, if the higher $o_i$, then the loss will be a sharper parabola; on the contrary, if the lower $o_i$, then the loss will be a flatter curve. Besides, if the  $o_i$ equals to 1, the wrapped loss is the same with the original least square loss, and less than 1 will be more flat than original.

\begin{figure}[hbtp]
\includegraphics[scale=0.5]{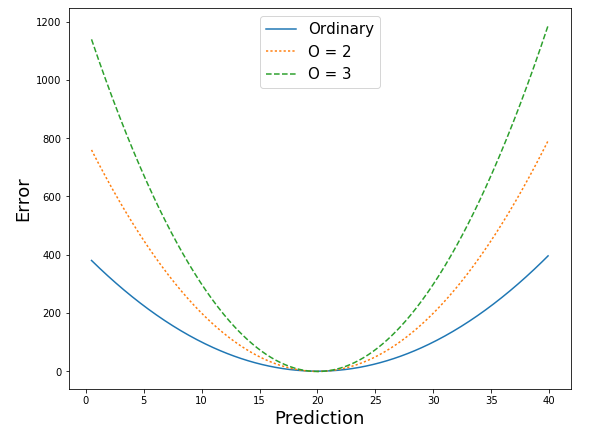}
\caption{Illustration of Wrap Error Surface 2D} 
\label{2DWrapSurface}
\end{figure}

In Figure \ref{3DWrapSurface}, we have a global view of the illustrated wrapped loss surface. As mentioned above, the loss and the gradient affected by the parameter $o_i$, and the updating step seems "jump" to the next location.

\begin{figure}[hbtp]
\includegraphics[scale=0.7]{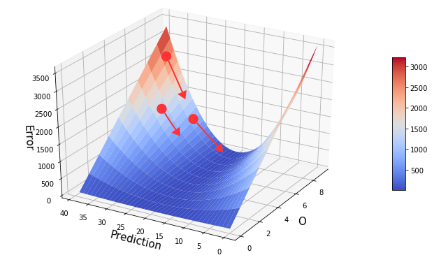}
\caption{Illustration of Error Surface 3D} 
\label{3DWrapSurface}
\end{figure}

\begin{lemma} \label{HatSigma}
If $\hat{\sigma}^2=(y_i-f(x))^2=\ell_i$, then  $E\left[\frac{\hat{\sigma_i}^2}{\sigma_i^2} \right]=DoF_i,$ where $DoF_i$ is the degree of freedom of the corresponding model.
\end{lemma}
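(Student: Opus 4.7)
The plan is to interpret $\hat{\sigma_i}^2$, despite the single-index shorthand in the statement, as the residual sum of squares for output $i$ aggregated over the $n$ training points, namely $\hat{\sigma_i}^2=\sum_{x\in X}(y_i(x)-f(x))^2$, and $DoF_i$ as the residual degrees of freedom of the fitted model for output $i$. Under these readings, the claim is an instance of the classical Cochran-type identity for quadratic forms in Gaussian noise, i.e.\ that $\hat{\sigma_i}^2/\sigma_i^2$ behaves like a $\chi^2_{DoF_i}$ random variable whose mean equals its parameter.

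First I would invoke the noise model of Eq.~\ref{Heteroscedasticity} and write the $n$-vector of training residuals for output $i$ as $r_i=(I-H_i)\epsilon_i$, where $\epsilon_i\sim N(0,\sigma_i^2 I_n)$ is the per-sample noise vector and $H_i$ is the (linearized) hat operator projecting onto the column space of the Jacobian $\partial f/\partial W$ evaluated at the fitted weights; this step uses the first-order optimality condition of the MLE from Lemma~2, which forces the residual to lie in the orthogonal complement of the model's tangent space. Second, I would take expectation of $\hat{\sigma_i}^2=r_i^\top r_i$ via the standard identity $E[\epsilon_i^\top A\epsilon_i]=\sigma_i^2\operatorname{tr}(A)$ with $A=(I-H_i)^\top(I-H_i)$; symmetric idempotence of $I-H_i$ then reduces this to $\sigma_i^2\operatorname{tr}(I-H_i)=\sigma_i^2(n-\operatorname{rank}(H_i))$. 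Third, I would identify $n-\operatorname{rank}(H_i)$ with $DoF_i$ by definition of the residual degrees of freedom and divide by $\sigma_i^2$ to obtain the stated equality.

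The main obstacle is that $f_W$ is a nonlinear neural network, so $H_i$ is not literally an orthogonal projection and the decomposition $r_i=(I-H_i)\epsilon_i$ holds only to first order around the fitted weights; in particular, exact idempotence of $I-H_i$ fails away from the linearization. I would address this either by (i) presenting the argument as exact in the linearized regime, which is standard practice when reasoning about variance estimators for nonlinear models, or (ii) adopting Ye's generalized degrees of freedom $\operatorname{gDoF}_i=\sum_k\partial\hat{y}_{i,k}/\partial y_{i,k}$ as the working definition of model DoF and setting $DoF_i:=n-\operatorname{gDoF}_i$, under which the identity holds by construction and no linearization error need be controlled.
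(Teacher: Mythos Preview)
Your proposal is sound under the interpretation you adopt, but it goes well beyond what the paper actually does. The paper's proof is a two-line appeal: from $y_i-f(x)=\epsilon_i\sim N(0,\sigma_i^2)$ it simply asserts, as ``well known in the chi-square test for variance,'' that $\hat\sigma_i^2/\sigma_i^2\sim\chi^2_{DoF_i}$, and then reads off the mean. It does not aggregate over the $n$ training points, does not introduce a hat matrix or a trace identity, and does not address the nonlinearity of $f_W$ at all.

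Your route---reinterpreting $\hat\sigma_i^2$ as a residual sum of squares over the sample, writing $r_i=(I-H_i)\epsilon_i$, and computing $E[r_i^\top r_i]=\sigma_i^2\operatorname{tr}(I-H_i)$---is the standard Cochran-type derivation that actually \emph{justifies} the chi-square claim in the linear(ized) regime. This buys you a concrete identification of $DoF_i$ with $n-\operatorname{rank}(H_i)$ (or with $n-\operatorname{gDoF}_i$ in your alternative) and an explicit account of where the approximation enters for a nonlinear $f_W$. The paper leaves both the meaning of $DoF_i$ and the single-observation versus sum-of-squares reading undefined, so your reinterpretation is reasonable but is your addition, not the paper's. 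In short: same endpoint---the ratio is chi-square and its mean equals its degrees of freedom---but you supply the mechanics that the paper merely gestures at.
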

\begin{proof}
Since $\hat{\sigma_i}^2=(y_i-f(x))^2=\ell_i$ and $y_i-f(x)=\epsilon_i {\sim} N(0,\sigma_i^2)$, it is well known in the chi-square test for variance that $\frac{o_i^2}{\sigma_i^2}\sim\chi_{DoF_i}^2$. Hence $E\left[\frac{\hat{\sigma_i}^2}{\sigma_i^2} \right]=DoF_i$. 
\end{proof}

In general, the degree of freedom can be a measure of the complexity of a statistical learning model.

\begin{theorem} \label{Thm2}
Suppose Eq. \ref{Heteroscedasticity}, and the degree of freedom of the neural network $f(x)$ is fixed for multi-output, i.e. $\forall i\in[c]$, $DoF_i=DoF$. 
Then, on average and approximately, wrap loss function can be presented as 
$$
E[\ell_{wrap}|ALGO\ref{alg::WrappedLossGradientUpdate} ]
\approx c(1+\log(DoF))+\sum_{i=1}^{c}{\log(E[\sigma_i^2])}.
$$
\end{theorem}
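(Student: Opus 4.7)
The plan is to exploit the fact that Algorithm \ref{alg::WrappedLossGradientUpdate} drives the auxiliary weight $o_i$ to the stationary point of $\partial \ell_{wrap}/\partial o_i = \ell_i - 1/o_i$, and then take expectations using Lemma \ref{HatSigma}. First, I would argue that at (or near) the fixed point of the $o$-update step of the algorithm, $\partial \ell_{wrap}/\partial o_i = 0$ yields $o_i = 1/\ell_i = 1/\hat{\sigma_i}^2$ (this is exactly the MLE characterization from Lemma 2 already proved above). Substituting $o_i = 1/\hat{\sigma_i}^2$ directly into the wrapped loss collapses it to an especially clean form: each summand becomes
\[
o_i \ell_i + \log o_i^{-1} = 1 + \log \hat{\sigma_i}^2,
\]
so that $\ell_{wrap} = c + \sum_{i=1}^c \log \hat{\sigma_i}^2$ conditional on the algorithm reaching its $o$-stationary configuration.

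Next I would take expectations of both sides. Since the $c$ is deterministic, the only randomness sits in the $\log \hat{\sigma_i}^2$ terms, and the theorem is billed as approximate, so I would invoke the standard first-order (delta-method) approximation $E[\log X] \approx \log E[X]$ to obtain
\[
E[\ell_{wrap}\mid \text{Alg.\ref{alg::WrappedLossGradientUpdate}}] \approx c + \sum_{i=1}^c \log E[\hat{\sigma_i}^2].
\]
Lemma \ref{HatSigma} then supplies $E[\hat{\sigma_i}^2] = DoF_i \cdot \sigma_i^2$, and the theorem's hypothesis $DoF_i = DoF$ lets me pull $\log(DoF)$ out of the sum, giving
\[
\sum_{i=1}^c \log E[\hat{\sigma_i}^2] = c \log(DoF) + \sum_{i=1}^c \log(\sigma_i^2),
\]
which combined with the leading $c$ yields exactly $c(1+\log(DoF)) + \sum_{i=1}^c \log(E[\sigma_i^2])$ (the outer $E[\cdot]$ in the stated theorem being trivial if $\sigma_i^2$ is treated as a fixed variance, or again justified by the same delta-method approximation if one prefers to treat $\sigma_i^2$ as a random quantity).

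The main obstacle is the step $E[\log \hat{\sigma_i}^2] \approx \log E[\hat{\sigma_i}^2]$: by Jensen's inequality this has a systematic downward bias whose size depends on the variance of $\hat{\sigma_i}^2/E[\hat{\sigma_i}^2]$. I would therefore either (a) explicitly flag this as the ``approximately'' in the theorem statement, remarking that the $\chi^2_{DoF}$ distribution from Lemma \ref{HatSigma} lets one bound the gap as $O(1/DoF)$ for moderately large $DoF$, or (b) cite the delta method directly. A secondary subtlety is that ``conditioning on Algorithm \ref{alg::WrappedLossGradientUpdate}'' really means evaluating at the $o$-update fixed point within each epoch; I would spend one sentence making this precise so that the substitution $o_i = 1/\ell_i$ is unambiguous. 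The remaining work is purely algebraic and requires no additional ideas beyond Lemmas \ref{NegLogLikelihood} and \ref{HatSigma}.
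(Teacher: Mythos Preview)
Your proposal is correct and follows essentially the same route as the paper: substitute the algorithm's stationary value $o_i=1/\hat{\sigma_i}^2$ into $\ell_{wrap}$ to obtain $c+\sum_i\log\hat{\sigma_i}^2$, then replace $\hat{\sigma_i}^2$ by $DoF\cdot\sigma_i^2$ via Lemma~\ref{HatSigma}. Your explicit invocation of the delta method / Jensen gap to justify $E[\log\hat{\sigma_i}^2]\approx\log E[\hat{\sigma_i}^2]$ is in fact more careful than the paper, which simply writes $\hat{\sigma_i}^2\approx DoF\cdot\sigma_i^2$ ``on average'' and substitutes directly.
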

\begin{proof}
Lemma \ref{HatSigma} implies that on average $\hat{\sigma_i}^2$ equals to $DoF_i\cdot\sigma_i^2$ and by assumption $DoF_i=DoF$, which means $\hat{\sigma_i}^2\approx DoF\cdot\sigma_i^2$. On the other hand, in Algorithm \ref{alg::WrappedLossGradientUpdate}, we assign the residual sum of squares $\hat{\sigma_i}^2$, i.e. $\ell_i=(y_i-f(x))^2$, to $o_i^{-1}$. Hence if the expectation conditions on Algorithm \ref{alg::WrappedLossGradientUpdate}, then
$$\aligned
\sum_{i=1}^{c}{o_i\ell_i+\log(o_i^{-1})}
&= \sum_{i=1}^{c}{\ell_i^{-1}\ell_i+\log(\hat{\sigma_i}^2)}\\
&\approx\sum_{i=1}^{c}{1+\log(DoF\cdot{\sigma_i}^2)}\\
&=c+c\log(DoF)+\sum_{i=1}^{c}{\log(\sigma_i^2)}.
\endaligned
$$
\end{proof}

\begin{figure}[hbtp]
\includegraphics[scale=0.5]{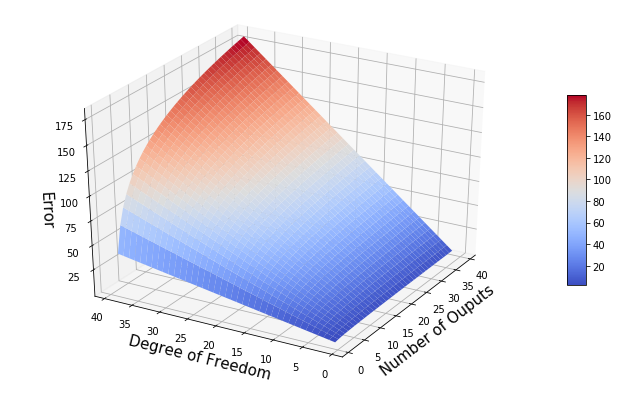}
\caption{Illustration of Theorem \ref{Thm2} }
\label{3DWrapGE}
\end{figure}

Figure \ref{3DWrapGE} shows the simplified surface of Theorem \ref{Thm2}. The term $\sum_{i=1}^clog(\sigma_i)$ is omitted to present the relation between the degree of freedom and the number of outputs. From the degree of freedom, the approximation surface is a logarithmic growth function when the number of outputs is fixed. On the contrary, the illustrated surface is a linear growth function of the number of outputs when a fixed degree of freedom. 

\section{Empirical Evaluations}
In this section, we prepare two sets of experiments with distinct characteristics to examine the properties of the proposed wrapped loss function.  The first set is fine atmospheric particulate matter (PM2.5) forecast, which is formulated as a multi-output regression.  The second set is the image object recognition, which is a multi-class classification task. These show the wrapped loss functions has the following properties from experiments. 

\begin{itemize}
  \item Wrapped loss function improves the accuracy of the other models with the original loss function.
   \item Wrapped loss function converges faster than original loss function.
  \item Wrapped loss function alleviates imbalanced data problem.
\end{itemize}

\subsection{PM2.5 Prediction}

\subsubsection{Data Description}
Fine Atmospheric particulate matter, known as PM2.5, is a collection particulates matter with a  diameter of 2.5 $\mu m$ or less that they have significant impacts on environment and climate, and damages to human health. In this task, we predict PM2.5 in an urban area (Taipei, Taiwan, in this study) using open data provided by the Environmental Protection Administration.  The data consists of 18 monitor stations with features including  weather features (precipitation, pressure, temperature, wind speed, wind direction, etc.),  air quality features (PM2.5, CO, NO2 , PM10, RH, SO2, etc.), and  traffic-related features (average vehicle speed, traffic load, intersection numbers, etc.). The data of the year of 2015 is used as the training set and 2016 as the test set. Note that the PM2.5 level is heteroscedastic since the variances are different at a different time of day. For instance, at 3 am, the PM2.5 level is low and has a small variance since there is almost no traffic on the roads, while at 2 pm, the variance becomes large depending on the events of the city and traffic situation.   
   
\subsubsection{Model Architecture}
In this set of experiments, we design a neural network with 6 fully connected layers, namely, FC1024, FC512, FC256, FC128, FC64 and output layer (Figure \ref{DNNforPM25}), where there is a dropout layer or batch normalization between each layer. The input vector is the information of a day and predicts target is a day after hours of PM2.5.

\begin{figure}[htp]
\includegraphics[scale=0.55]{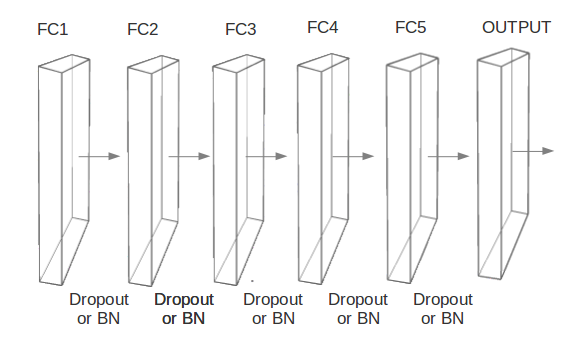}
\caption{Deep Learning Model} 
\label{DNNforPM25}
\end{figure}

The optimization algorithm of this model is AdaGrad \cite{john2011} with a fixed learning rate is set as 0.01 and mini-batch size is 10. Mean square error is the loss function and.

\subsubsection{Data Preprocessing}

Generally, deep learning method automatically extracts useful knowledge to create a model without handcrafted feature engineering. In this experiment, we only normalize training data to Gaussian distributions with zero mean and unit variance.

\subsubsection{Results}

The goal of the experiment is to predict the PM2.5 level of BangQiao and Cailiao in 6 hours, 12 hours, 18 hours and 24 hours with dropout (DO) \cite{srivastava2014}, batch normalization (BN) \cite{sergey2015} and wrapped loss function (WR) and their combinations (WR+DO, WR+BN, and WR+BN+DO). Note that this task is a heteroscedasticity problem. For example,  PM2.5 at 2 pm and 3 am has a different variance. The RMSE comparison between the original model with others techniques and the one with wrapped loss function are displayed in Table \ref{Table01} for and the Banqiao monitoring station, and Table \ref{Table02} for Cailiao monitoring station. 

The results of the experiment show that if the training uses only dropout, batch normalization, or wrapped loss, then the proposed wrapped loss is usually the second-best. 
Although the wrapped loss function performs worse than the original model with dropout, 
the wrapped loss function works with dropout and/or batch normalization at the same time, and it shows the performance is best for wrapped loss combined with both dropout and batch normalization. Generally, the wrapped loss function usually improves the performance of the original methods.

\begin{table}[!h] 
\centering  
    \caption{DNN: PM2.5 Best RMSE of Banqiao Station}     
\label{Table01} 

\begin{small}  
\begin{tabular}{|c|c|c|c|c|c|c|}  

  \hline  
  \bfseries  \#Times    &\bfseries DO  & \bfseries BN  &\bfseries WR &\bfseries W+D & \bfseries W+B\\
 \hline  
 6     & 11.06  &13.05  &13.2   &11.08  &13.42  \\  
 \hline  
 12    & 12.83  &14.77  &14.14  &12.19  &15.97 \\  
  \hline  
 18    & 12.96  &15.64  &15.62  &12.65  &15.07 \\ 
 \hline  
 24    & 13.66  &18.29  &16.94  &13.30  &15.36 \\  
 \hline  
 \end{tabular}  
    \end{small}  
\end{table} 

\begin{table}[!h]
\centering  
    \caption{DNN: PM2.5 Best RMSE of Cailiao Station}     
\label{Table02}    
\begin{small}  
\begin{tabular}{|c|c|c|c|c|c|c|}  
  \hline  
  \bfseries  \#Times    &\bfseries DO  & \bfseries BN  &\bfseries WR &\bfseries W+D & \bfseries W+B \\  
 \hline  
 6     & 8.83  &14.42  &13.48   &9.13  &13.18 \\  
 \hline  
 12    & 9.65  &14.03  &13.73   &10.37  &12.13 \\  
  \hline  
 18    & 10.29  &15.02  &12.63  &11.02  &13.95 \\ 
 \hline  
 24    & 10.93  &14.1  &12.74   &10.56  &15.18\\  
 \hline  
 \end{tabular}  
    \end{small}  
\end{table}

We also compared the number of Epochs needed to complete the training process. According to Table \ref{Table03}, it can be seen that the wrapped loss function has fastest convergence rate. 

\begin{table}[!h]
\centering  
    \caption{DNN: Epoch of Best PM2.5 RMSE of Banqiao Station}     
 \label{Table03}    
\begin{small}  
\begin{tabular}{|c|c|c|c|c|c|c|}  

  \hline  
  \bfseries  \#Times    &\bfseries DO  & \bfseries BN  &\bfseries WR &\bfseries WR+DO & \bfseries WR+BN \\  
 \hline  
 6     & 1789  &1  &1   &1665   &1   \\  
 \hline  
 12    & 1262  &1  &1   &1190   &1 \\  
  \hline  
 18    & 1427  &2  &1   &1163   &1 \\ 
 \hline  
 24    & 641  &1  &1    &1294   &1 \\  
 \hline  
 \end{tabular}  
    \end{small}  
\end{table}

\begin{table}[!h] 
\centering  
    \caption{DNN: Epoch of Best PM2.5 RMSE of Cailiao Station}     
 \label{Table04}    
\begin{small}  
\begin{tabular}{|c|c|c|c|c|c|c|}  

  \hline  
  \bfseries  \#Times    &\bfseries DO  & \bfseries BN  &\bfseries WR &\bfseries WR+DO & \bfseries WR+BN \\  
 \hline  
 6     & 1769  &3  &2   &953   &2 \\  
 \hline  
 12    & 1503 &1  &1   &802   &1  \\  
  \hline  
 18    & 1297  &2  &1   &986   &1\\ 
 \hline  
 24    & 1132  &1  &1   &787   &1 \\  
 \hline  
 \end{tabular}  
    \end{small}  
\end{table} 

\subsection{Image Object Recognition}
\subsubsection{Data Description}
Different from the multi-output regression in the previous subsection, the second task is of multi-class classification to assign a class to the image object. The CIFAR-100 \cite{krizhevsky2009learning} data set consists of 60000 32$\times$32 RGB color images in 100 classes. In each class, there are 600 images with 500 training data and 100 testing data. In addition, the 100 classes are grouped into 20 super-classes. In this task, it is to classify color photographs of object into one of 100 classes.

\subsubsection{Model Architecture}
Convolution neural networks (CNN) is one of most popular and successful deep learning models \cite{lecun1998} in image processing tasks. We use  LeNet-5 and AlexNet \cite{alex2012} as the original functions for image recognition task. In the architecture of LeNet (as shown in the left part of Figure \ref{LeNet_AlexNet}), after two stacks of convolution with padding and max pooling sections, the activation function ReLU is computed. Then, two fully connected layers follow, and a softmax layer is at the end. The loss function is cross entropy function. The reason we choose LeNet and AlexNet is that their source codes were available from tensorflow and have credibility. The training parameters are set as: Fixed learning rate of 0.001, Mini-batch size of 100. And the AlexNet (as shown in the right part of Figure \ref{LeNet_AlexNet}) had a similar architecture as LeNet but is deeper and has more filters.

\begin{figure}[htp] 
\includegraphics[scale=0.5]{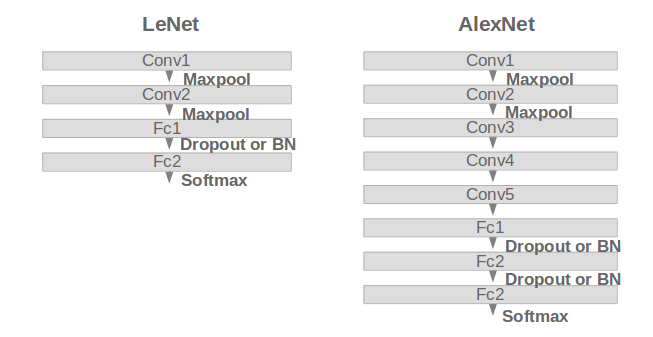}
\caption{LeNet and AlexNet Model} 
\label{LeNet_AlexNet}
\end{figure}

\subsubsection{Results}
First, We run both LeNet and AlexNet to classify pictures into one of 5, 10, 30, 50, 80, 100 classes. The results are shown in Table \ref{Table05} and Table \ref{Table06}. It can be observed that the wrapped loss function performs better than DO and BN in both LeNet and AlexNet, especially when the task is classified image objects into large number of classes.

\begin{table}[!h] 
\centering  
    \caption{LeNet: Best Accuracy of CIFAR 100 }     
 \label{Table05}    
\begin{small}  
\begin{tabular}{|c|c|c|c|c|c|c|}  

  \hline  
  \bfseries  \#Classes    &\bfseries DO  & \bfseries BN  &\bfseries WR &\bfseries WR+DO & \bfseries WR+BN\\  
 \hline  
 5     & 75.6     &77.4  &78   &77.4  & 77.2 \\  
 \hline  
 10    & 66.2     &66.5  &68.2 &65.7  & 69.1 \\  
  \hline  
 30    & 46.7     &48.3  &50.7 &49.8  &50.7\\ 
 \hline  
 50    & 39.1     &40.1  &42.4 &42.7  &42.4\\  
 \hline  
 80    & 36.2     &37.6  &38.8 &38.3 &38.2 \\  
 \hline  
 100   & 32.6     &34    &35.9 &35.8  &36\\  
 \hline  
 \end{tabular}  
    \end{small}  
\end{table} 

\begin{table}[!h] 
\centering  
    \caption{AlexNet: Best Accuracy of CIFAR 100 }     
 \label{Table06}    
\begin{small}  
\begin{tabular}{|c|c|c|c|c|c|c|}  

  \hline  
  \bfseries  \#Classes    &\bfseries DO  & \bfseries BN  &\bfseries WR &\bfseries WR+DO & \bfseries WR+BN\\  
 \hline  
 5     & 78.4  &79.8  &77.4   &78.4  &78.8 \\  
 \hline  
 10    & 66    &67    &67.5   &66    &66.5 \\  
  \hline  
 30    & 40.7  &40.5  &44.4   &40.7  &46.3\\ 
 \hline  
 50    & 31.8  &31.4  &37.5   &31.8  &36.8\\  
 \hline  
 80    & 27.6  &27.9  &33     &27.6  &33.2 \\  
 \hline  
 100   & 25.6  &25.2  &30.3   &25.6  &30.1\\  
 \hline  
 \end{tabular}  
    \end{small}  
\end{table} 

The numbers of Epochs needed to complete the training process are compared. From the results of Table \ref{Table07} and Table \ref{Table08}, it can be seen that the wrapped loss function converged mush faster than the LeNet and AlexNet. It is also interesting to find out when the wrapped loss function works with dropout or batch normalization, the training process is completed in an even shorter time. The possible reason is that the wrapped loss function approach adopts idea similar to Gauss-Newton method \cite{wedderburn1974quasi} that fast convergence is a famous advantage of Newton's optimization methods. 

\begin{table}[!h]  
\centering  
    \caption{LeNet: Epoch of Best Accuracy of CIFAR 100}     
\label{Table07}   
\begin{small}  
\begin{tabular}{|c|c|c|c|c|c|c|}  

  \hline  
  \bfseries  \#Classes    &\bfseries DO  & \bfseries BN  &\bfseries WR &\bfseries WR+DO & \bfseries WR+BN\\  
 \hline  
 5     & 9k   &26k   &8k    &3k     &5k \\  
 \hline  
 10    & 35k  &29k   &13k   &11k    &14k \\  
  \hline  
 30    & 39k  &42k   &22k   &18k    &24k\\ 
 \hline  
 50    & 57k  &65k   &22k   &20k    &29k\\  
 \hline  
 80    & 76k  &140k  &56k   &21k    &55k\\  
 \hline  
 100   & 91k  &104k  &56k   &28k    &9k\\  
 \hline  
 \end{tabular}  
    \end{small}  
\end{table} 

\begin{table}[!h]  
\centering  
    \caption{AlexNet: Epoch of Best Accuracy of CIFAR 100}     
\label{Table08}    
\begin{small}  
\begin{tabular}{|c|c|c|c|c|c|c|}  

  \hline  
  \bfseries  \#Classes    &\bfseries DO  & \bfseries BN  &\bfseries WR &\bfseries WR+DO & \bfseries WR+BN\\  
 \hline  
 5     & 29k  &30k   &4k    &15k    &5k \\  
 \hline  
 10    & 50k  &45k   &7k    &5k     &6k \\  
  \hline  
 30    & 36k  &50k   &10k   &13k    &14k\\ 
 \hline  
 50    & 39k  &46k   &16k   &15k    &17k\\  
 \hline  
 80    & 58k  &53k   &18k   &20k    &24k\\  
 \hline  
 100   & 59k  &62k   &26k   &24k    &27k\\  
 \hline  
 \end{tabular}  
    \end{small}  
\end{table}


\subsubsection{imbalancing data}

Here we show the capability of wrapped loss function in improving the performance on imbalanced data set. 10 classes from CIFAR100 are selected randomly and the number of data a selected class is reduced  to 2\%, 5\%, 10\%, 20\% and 30\% respectively of the original size. For example, the adjusted class only has 50 images when the proportion is down to 10\%. We compare the performances on accuracy of three settings: LeNet, LeNet with wrapped loss function and 
the frequency balancing method \cite{bad2015,farabet2012,eigen2015}. 
 
The experiment results of using imbalanced data as in Table \ref{Table09} shows the wrapped loss function and the frequency balancing has similar accuracy rate, while both are better than the original LeNet. Table \ref{Table09} has an ADJ column shows the accuracy rate of the adjusted class that the Frequency Balancing method has a slightly better performance than the wrapped loss function. However, the wrapped loss function still has the shortest time to complete the training process, as in Table \ref{Table10}. The possible reason is if the class has larger total loss, $o_i$ will be smaller. So the rare class get the higher gradient that it affects the performance on imbalanced data.

\begin{table}[!h]  
\centering  
    \caption{LeNet: Accuracy of Imbalanced Data}
\label{Table09}    
\begin{small}  
\begin{tabular}{|c|c|c|c|c|c|c|}  
 \hline 
 &\multicolumn{2} {c|}{\bfseries Original} &\multicolumn{2} {c|}{\bfseries Freq Balancing} &\multicolumn{2} {c|}{\bfseries Wrapped Loss}\\
  \hline  
  \%     & ADJ &Total & ADJ & Total & ADJ & Total \\  
 \hline  
 2\%    & 3    & 60.4 & 16   & 60.4 &18     &61.7\\  
 \hline  
 5\%    & 25   & 61.2 & 35  & 62.5  &30    &63.6\\  
 \hline   
 10\%   & 39   & 63.2 & 51  & 64.5  &45    &64.4 \\  
 \hline  
 20\%   & 68   & 65.3 & 74  & 65  &68    &65.7\\  
 \hline  
 30\%   & 74   & 66.2 & 76  & 66.2  &73    &68.1\\  
 \hline  
 \end{tabular}  
    \end{small}  
\end{table}

\begin{table}[!h] 
\centering  
    \caption{LeNet: Epoch of Best Accuracy of Imbalanced Cifar 100 with different Imbalanced Rate}     
\label{Table10}     
\begin{small}  
\begin{tabular}{|c|c|c|c|c|}  

  \hline  
    \%    & {\bfseries Original}  & {\bfseries Freq Balancing}    & {\bfseries Wrapped Loss}\\  
 \hline  
 2\%     & 34k     & 26k   &11k  \\  
 \hline  
 5\%     & 37k     & 38k   &12k  \\  
  \hline  
 10\%    & 48k     & 37k   &10k  \\ 
 \hline  
 20\%    & 54k     & 29k  &7k  \\  
 \hline  
 30\%    & 35k     & 30k  &14k  \\  
 \hline   
 \end{tabular}  
    \end{small}  
\end{table}

\section{Conclusions}

We have introduced the approach of wrapped loss function and shown it alleviates the problem of the heteroscedastic variances, it has fast convergence rate and improves accuracy for both multi-output regression task and multi-class classification on deep learning model.

In theoretical aspects, we analyze the wrapped loss function and derive its gradient, which leads to the training process in Algorithm 1, and obtain an approximation of expected error. The training process using wrapped loss iteratively modulates the updating ratio according to the corresponding residual loss, which boosts the convergence rate. The approximation of expected wrap error which conditions on the execution of our training process shows the effects of the length of multi-output (or the number of class) and the model complexity (measured by the degree of freedom).

In the empirical evaluations, we compared original loss and wrapped loss on four experiments of multi-output regression and classification tasks. The experiment results are encouraging, which display wrapped loss can have better accuracy rate than batch normalization and dropout in the CIFAR100 case. In the air pollution case, if we combined wrapped loss function with batch normalization and dropout, it improves the performance too. Moreover, it has advantageous properties of faster convergence rate and alleviation the ill effect of imbalance labels. We believe the wrapped function is convenient to apply and its gain cannot be overlooked during the training phrase of deep learning model.

\bibliography{example_paper.bib}

\begin{thebibliography}{10}

\bibitem{bad2015}
Vijay Badrinarayanan, Alex Kendall, and Roberto Cipolla.
\newblock Segnet: A deep convolutional encoder-decoder architecture for image
  segmentation.
\newblock {\em IEEE transactions on pattern analysis and machine intelligence},
  39(12):2481--2495, 2017.

\bibitem{john2011}
John Duchi, Elad Hazan, and Yoram Singer.
\newblock Adaptive subgradient methods for online learning and stochastic
  optimization.
\newblock {\em Journal of Machine Learning Research}, 12(Jul):2121--2159, 2011.

\bibitem{eigen2015}
David Eigen and Rob Fergus.
\newblock Predicting depth, surface normals and semantic labels with a common
  multi-scale convolutional architecture.
\newblock In {\em Proceedings of the IEEE International Conference on Computer
  Vision}, pages 2650--2658, 2015.

\bibitem{farabet2012}
Cl{\'e}ment Farabet, Camille Couprie, Laurent Najman, and Yann LeCun.
\newblock Scene parsing with multiscale feature learning, purity trees, and
  optimal covers.
\newblock {\em arXiv preprint arXiv:1202.2160}, 2012.

\bibitem{freund1995boosting}
Yoav Freund.
\newblock Boosting a weak learning algorithm by majority.
\newblock {\em Information and computation}, 121(2):256--285, 1995.

\bibitem{pavel2013}
Pavel Golik, Patrick Doetsch, and Hermann Ney.
\newblock Cross-entropy vs. squared error training: a theoretical and
  experimental comparison.
\newblock In {\em Interspeech}, volume~13, pages 1756--1760, 2013.

\bibitem{green1984}
Peter~J Green.
\newblock Iteratively reweighted least squares for maximum likelihood
  estimation, and some robust and resistant alternatives.
\newblock {\em Journal of the Royal Statistical Society. Series B
  (Methodological)}, pages 149--192, 1984.

\bibitem{sergey2015}
Sergey Ioffe and Christian Szegedy.
\newblock Batch normalization: Accelerating deep network training by reducing
  internal covariate shift.
\newblock In {\em International conference on machine learning}, pages
  448--456, 2015.

\bibitem{kendall2017}
Alex Kendall, Yarin Gal, and Roberto Cipolla.
\newblock Multi-task learning using uncertainty to weigh losses for scene
  geometry and semantics.
\newblock {\em arXiv preprint arXiv:1705.07115}, 2017.

\bibitem{krizhevsky2009learning}
Alex Krizhevsky and Geoffrey Hinton.
\newblock Learning multiple layers of features from tiny images.
\newblock 2009.

\bibitem{alex2012}
Alex Krizhevsky, Ilya Sutskever, and Geoffrey~E Hinton.
\newblock Imagenet classification with deep convolutional neural networks.
\newblock In {\em Advances in neural information processing systems}, pages
  1097--1105, 2012.

\bibitem{kukar1998}
Matjaz Kukar, Igor Kononenko, et~al.
\newblock Cost-sensitive learning with neural networks.
\newblock In {\em ECAI}, pages 445--449, 1998.

\bibitem{steve1998}
Steve Lawrence, Ian Burns, Andrew Back, Ah~Chung Tsoi, and C~Lee Giles.
\newblock Neural network classification and prior class probabilities.
\newblock In {\em Neural networks: tricks of the trade}, pages 299--313.
  Springer, 1998.

\bibitem{lecun1998}
Yann LeCun, L{\'e}on Bottou, Yoshua Bengio, and Patrick Haffner.
\newblock Gradient-based learning applied to document recognition.
\newblock {\em Proceedings of the IEEE}, 86(11):2278--2324, 1998.

\bibitem{sankaran2016}
Sankaran Panchapagesan, Ming Sun, Aparna Khare, Spyros Matsoukas, Arindam
  Mandal, Bj{\"o}rn Hoffmeister, and Shiv Vitaladevuni.
\newblock Multi-task learning and weighted cross-entropy for dnn-based keyword
  spotting.
\newblock In {\em INTERSPEECH}, pages 760--764, 2016.

\bibitem{huy2017}
Huy Phan, Martin Krawczyk-Becker, Timo Gerkmann, and Alfred Mertins.
\newblock Dnn and cnn with weighted and multi-task loss functions for audio
  event detection.
\newblock {\em arXiv preprint arXiv:1708.03211}, 2017.

\bibitem{srivastava2014}
Nitish Srivastava, Geoffrey Hinton, Alex Krizhevsky, Ilya Sutskever, and Ruslan
  Salakhutdinov.
\newblock Dropout: A simple way to prevent neural networks from overfitting.
\newblock {\em The Journal of Machine Learning Research}, 15(1):1929--1958,
  2014.

\bibitem{wedderburn1974quasi}
Robert~WM Wedderburn.
\newblock Quasi-likelihood functions, generalized linear models, and the
  gauss—newton method.
\newblock {\em Biometrika}, 61(3):439--447, 1974.

\end{thebibliography}
\bibliographystyle{plain}
\end{document}